%% file: main.tex
\input{_0_vars}

\ifdefined\isarxiv
\documentclass[11pt]{article}
\usepackage[numbers]{natbib}
\else
\documentclass{article} 
\usepackage{iclr2026_conference,times}

\fi

\ifdefined\isarxiv

\usepackage{amsmath}
\usepackage{amsthm}
\usepackage{amssymb}
\usepackage{algorithm}
\usepackage{subfig}
\usepackage{algpseudocode}
\usepackage{graphicx}
\usepackage{grffile}
\usepackage{wrapfig,epsfig}
\usepackage{url}
\usepackage{xcolor}
\usepackage{epstopdf}

\usepackage{bbm}
\usepackage{dsfont}

\else 

\usepackage{hyperref}
\usepackage{url}

\fi

\ifdefined\isarxiv

\else

\usepackage{amsmath}
\usepackage{amsthm}
\usepackage{amssymb}
\usepackage{algorithm}
\usepackage{subfig}
\usepackage{algpseudocode}
\usepackage{graphicx}
\usepackage{grffile}
\usepackage{wrapfig,epsfig}
\usepackage{url}
\usepackage{xcolor}
\usepackage{epstopdf}

\usepackage{bbm}
\usepackage{dsfont}

\fi
 
\allowdisplaybreaks

\ifdefined\isarxiv

\usepackage{tikz}
\usepackage{hyperref}  
\hypersetup{colorlinks=true,citecolor=blue,linkcolor=blue} 
\usetikzlibrary{arrows}
\usepackage[margin=1in]{geometry}

\else
\fi
 
\graphicspath{{./figs/}}

\theoremstyle{plain}
\newtheorem{theorem}{Theorem}[section]
\newtheorem{lemma}[theorem]{Lemma}
\newtheorem{definition}[theorem]{Definition}

\newtheorem{fact}[theorem]{Fact}
\newtheorem{remark}[theorem]{Remark}

\ifdefined\isarxiv

\else
\renewcommand\cite\citep
\fi

\input{_1_maths}

\begin{document}

\ifdefined\isarxiv

\date{}
\title{\paperTitle}
\author{\paperAuthor}

\else

\title{\paperTitle}

\author{Antiquus S.~Hippocampus, Natalia Cerebro \& Amelie P. Amygdale \thanks{ Use footnote for providing further information
about author (webpage, alternative address)---\emph{not} for acknowledging
funding agencies.  Funding acknowledgements go at the end of the paper.} \\
Department of Computer Science\\
Cranberry-Lemon University\\
Pittsburgh, PA 15213, USA \\
\texttt{\{hippo,brain,jen\}@cs.cranberry-lemon.edu} \\
\And
Ji Q. Ren \& Yevgeny LeNet \\
Department of Computational Neuroscience \\
University of the Witwatersrand \\
Joburg, South Africa \\
\texttt{\{robot,net\}@wits.ac.za} \\
\AND
Coauthor \\
Affiliation \\
Address \\
\texttt{email}
}

%

\newcommand{\fix}{\marginpar{FIX}}
\newcommand{\new}{\marginpar{NEW}}

\maketitle

\fi

\ifdefined\isarxiv
  \maketitle
  \begin{abstract}

\input{00_abstract}

  \end{abstract}


\else

\begin{abstract}
\input{00_abstract}
\end{abstract}

\fi


\input{_2_body}

\ifdefined\isarxiv
\bibliographystyle{alpha}
\bibliography{ref}
\else
\fi

\end{document}

%% file: _0_vars.tex
\def\isarxiv{1}

\def\paperTitle{Tracking High-order Evolutions via Cascading Low-rank Fitting}

\def\paperAuthor{
Zhao Song\thanks{\texttt{magic.linuxkde@gmail.com}. Work done while at Simons Institute for Theory of Computing, UC Berkeley.}}


%% file: _1_maths.tex
\newcommand{\R}{\mathbb{R}}

\renewcommand{\d}{\mathrm{d}}

\newcommand{\Tmat}{{\cal T}_{\mathrm{mat}}}

\DeclareMathOperator{\rank}{rank}
\DeclareMathOperator{\diag}{diag}

%% file: 00_abstract.tex
Diffusion models have become the de facto standard for modern visual generation, including well-established frameworks such as latent diffusion and flow matching. Recently, modeling high-order dynamics has emerged as a promising frontier in generative modeling. Rather than only learning the first-order velocity field that transports random noise to a target data distribution, these approaches simultaneously learn higher-order derivatives, such as acceleration and jerk, yielding a diverse family of higher-order diffusion variants. 
To represent higher-order derivatives, naive approaches instantiate separate neural networks for each order, which scales the parameter space linearly with the derivative order. To overcome this computational bottleneck, we introduce cascading low-rank fitting, an ordinary differential equation inspired method that approximates successive derivatives by applying a shared base function augmented with sequentially accumulated low-rank components. 

Theoretically, we analyze the rank dynamics of these successive matrix differences. We prove that if the initial difference is linearly decomposable, the generic ranks of high-order derivatives are guaranteed to be monotonically non-increasing. Conversely, we demonstrate that without this structural assumption, the General Leibniz Rule allows ranks to strictly increase. Furthermore, we establish that under specific conditions, the sequence of derivative ranks can be designed to form any arbitrary permutation. Finally, we present a straightforward algorithm to efficiently compute the proposed cascading low-rank fitting.

%% file: _2_body.tex

\input{01_intro}

\input{02_preli}

%% file: 01_intro.tex
\section{Introduction}

Generative frameworks such as latent diffusion~\cite{rbl+22,brl+23,stable_video-diffusion} and flow matching~\cite{flow_matching,rectified_flow,av22} have become foundational to modern video generation, enabling state-of-the-art video generation systems like Sora~\cite{sora}, Seedance~\cite{seedance_10,seedance_15}, and Stable Video Diffusion~\cite{brl+23,stable_video-diffusion}. Mathematically, if we let $g(t)$ denote the trajectory of a sample as it evolves from random noise to the target data distribution, several diffusion models like latent diffusion aim to learn a network $f_0$ that directly approximates $g$.  Flow matching focuses on learning a vector field $f_1$ that approximates the first-order time derivative, $\frac{\d g}{\d t}$. Given the success of these methods and the importance of noise scheduling in diffusion models~\cite{kaal22}, it is a natural progression to model higher-order~\cite{shl+25} trajectory dynamics. To learn the derivatives of $g$ up to order $k$, the naive approach is to construct $k+1$ separate neural networks ($f_0, f_1, \dots, f_k$). However, this causes an unacceptable parameter blowup, since the total parameter space expands to $(k+1)N$ if a single network requires $N$ parameters. A standard strategy to constrain this growth is to apply Low-Rank Adaptation (LoRA) \cite{hsw+22_lora}. Under a standard LoRA paradigm, one might learn a unified base function $f$ parameterized by dense weights $W$, and append independent low-rank weight matrices $A_i, B_i$ to specifically approximate the $i$-th derivative. In this paper, we demonstrate that by viewing the system through an ODE formulation, we uncover a more powerful structural intuition: \textit{cascading low-rank fitting}. Because higher-order derivatives inherently build upon lower-order states, our cascading method evaluates the $i$-th derivative of $g$ not through isolated adapters, but by applying $f$ over the base weights $W$ synergistically accumulated with all sequential low-rank components up to order $i$, denoted by $\{ A_j, B_j \}_{j \leq i}$. Intuitively, we use $f( ( W - \sum_{j \leq i} A_j B_j) x)$ to interpolate $\frac{d^i g(t)}{\d^i t} = ( C(t) - \sum_{j \leq i} L_j(t) ) x$. While our explicit analysis focuses on linear weight transformations, extending this framework to softmax functions in modern attention mechanisms is straightforward. By leveraging established polynomial approximation techniques \cite{acss20,aa22,as23,as24_neurips,as24_iclr,as25_rope,cpa26}, our method can be directly adapted to optimize the softmax operations over quadratic pairwise correlations found in modern attention mechanisms.

%% file: 02_preli.tex
Intuitively, when applying diffusion processes to video generation, it is highly desirable for the rank $r_i$ to be monotonically non-increasing or to decay rapidly, where $r_i = \rank(L_i(t))$ and $L_i(t) = \frac{\d^{i-1} W(t)}{\d t^{i-1}} - \frac{\d^i W(t)}{\d t^i}$. This decay property allows us to allocate progressively smaller LoRA adapters for higher-order dynamics, improving parameter efficiency. In this work, we systematically explore the scenarios under which this rank sequence naturally shrinks, alongside the conditions that allow for arbitrary rank patterns. To establish this foundation, we first introduce the concept of linear decomposability for matrix functions.
\begin{definition}[linearly decomposable]
For a matrix $U(t)$ that is parameterized with $t$, we say $U(t)$ is $r$-linearly decomposable, if there exists $r$ functions (can depend on $t$) $f_1(t), \cdots, f_r(t)$, $2r$ vectors  (independent of $t$) $u_1, \cdots, u_r$, $v_1, \cdots, v_r$ satisfy that $U(t) = \sum_{j=1}^r f_{j}(t) u_j v_j^\top$.
\end{definition}
Under the assumption of linear decomposability, we prove the following result.
\begin{lemma}
Let $W(t)$ denote an $n \times n$ matrix. Let $k$ denote a positive integer. For each $i \in [k]$, we define $L_i(t) = \frac{\d^{i-1} W(t)}{\d t^{i-1}} - \frac{\d^i W(t)}{\d t^i}$.  Assume that $L_1(t)$ is $r_1$-linearly decomposable, i.e., $L_1(t) = \sum_{j=1}^{r_1} f_j(t) u_j v_j^\top$. For each $i \geq 2,3,\cdots, k$, we define the rank of $L_i(t)$ to be $r_i$. Then we can show that $r_1 \geq r_2 \geq \dots \geq r_k$.
\end{lemma}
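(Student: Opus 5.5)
Note that for $i \ge 2$, $L_i(t) = \frac{\d^{i-1}W}{\d t^{i-1}} - \frac{\d^{i}W}{\d t^{i}} = \frac{\d^{i-1}}{\d t^{i-1}}\bigl( W(t) - \frac{\d W(t)}{\d t} \bigr) = \frac{\d^{i-1} L_1(t)}{\d t^{i-1}}$. This holds because differentiation is linear and the two terms differ only by a shift of derivative order. So all higher $L_i$ are simply derivatives of $L_1$, and the plan is to push this identity through the decomposition of $L_1$.

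By the $r_1$-linear decomposability assumption, $L_1(t) = \sum_{j=1}^{r_1} f_j(t) u_j v_j^\top$ with $u_j, v_j$ independent of $t$. Differentiating term by term, the vectors are constants, so
\begin{align*}
L_i(t) = \sum_{j=1}^{r_1} f_j^{(i-1)}(t)\, u_j v_j^\top = U\, D_i(t)\, V^\top,
\end{align*}
where $U = [u_1,\dots,u_{r_1}]$, $V = [v_1,\dots,v_{r_1}]$, and $D_i(t) = \diag(f_1^{(i-1)}(t),\dots,f_{r_1}^{(i-1)}(t))$. We assume the $f_j$ are $(k-1)$ times differentiable, which is implicit in the statement. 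Then $L_i(t)$ lies in the span of the fixed rank-one matrices $u_j v_j^\top$ for every $i$, and at each $t$ its rank is at most the number of indices $j$ with $f_j^{(i-1)}(t) \neq 0$.

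To get monotonicity rather than just the bound $r_i \le r_1$, I would interpret $r_i$ as the generic rank, i.e.\ the maximum over $t$, as the abstract indicates. Without loss of generality the $u_j$ are linearly independent and so are the $v_j$; otherwise one regroups the terms to get a shorter decomposition, so that $r_1$ is the genuine rank. Then $\rank L_i(t) = \#\{j : f_j^{(i-1)}(t)\neq 0\}$. The generic rank of $L_i$ is therefore the number of indices $j$ for which $f_j^{(i-1)}$ is not identically zero. If $f_j^{(i-1)} \equiv 0$, then $f_j^{(i)} \equiv 0$ as well. Hence the set of surviving indices shrinks weakly with $i$, which gives $r_1 \ge r_2 \ge \dots \ge r_k$.

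The main obstacle is the pointwise-versus-generic subtlety. At a specific $t$, a derivative $f_j^{(i)}(t)$ can be nonzero where $f_j^{(i-1)}(t)=0$, so pointwise ranks need not be monotone. I would resolve this by making the generic interpretation explicit, defining $r_i := \max_t \rank L_i(t)$. I would also handle the reduction to independent $u_j$ and $v_j$ carefully. If the $u_j$ are dependent, merging terms changes the coefficient functions into linear combinations of the $f_j$. Each step of differentiation preserves the linear span, so the same argument applies to the span of the coefficient functions: its dimension can only drop under differentiation, because the derivatives of a basis span the new space.
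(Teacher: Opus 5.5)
Your core argument matches the paper's proof. You write $L_i = \frac{\d^{i-1}}{\d t^{i-1}} L_1 = U D_i(t) V^\top$, note that the rank equals the number of surviving diagonal entries, and observe that an identically zero $f_j^{(i-1)}$ stays zero. The gap is the ``without loss of generality'' step that gives independent $u_j$'s and $v_j$'s. Regrouping leaves the class of linear decompositions. If $u_3 = a u_1 + b u_2$, merging gives $u_1 (f_1 v_1 + a f_3 v_3)^\top + u_2 (f_2 v_2 + b f_3 v_3)^\top + \cdots$. The right factors are now time-dependent vectors, so you lose both the constant $V$ and the diagonal $D_i(t)$. Your fallback, that $\dim \mathrm{span}\{f_j^{(i-1)}\}$ is non-increasing, is true but does not control $\rank L_i(t)$.

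No argument can close this gap, because the claim is false without minimality. The matrix $L_1(t) = \begin{bmatrix} 1 & t \\ t & t^2 \end{bmatrix} = e_1 e_1^\top + t\, e_1 e_2^\top + t\, e_2 e_1^\top + t^2 e_2 e_2^\top$ is $4$-linearly decomposable and has rank $1$. It is not $1$-linearly decomposable, since its column space moves with $t$. Its derivative $L_2$ has rank $2$, so your reduction would ``prove'' $1 \ge 2$. Worse, every polynomial matrix is linearly decomposable, for instance $u(t)u(t)^\top = \sum_{a,b} t^{a+b-2} e_a e_b^\top$ with $u(t) = [1, t, \dots, t^{n-1}]^\top$. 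The paper's strict-increase lemma shows this choice gives $\rank L_2 = 2 < 3 = \rank L_3$ for $n \ge 3$. So even the literal chain $r_2 \ge r_3$ fails.

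The repair is what the paper does in its first line: read the hypothesis as saying the decomposition is minimal, i.e.\ $\rank L_1 = r_1$. Then $u_1, \dots, u_{r_1}$ must be independent, since otherwise the column space of $L_1(t)$ lies in a subspace of dimension $< r_1$; the same holds for the $v_j$. Drop the dependent-case discussion and the rest of your argument goes through.

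A smaller point concerns $r_i := \max_t \rank L_i(t)$. Your identity $r_i = \#\{j : f_j^{(i-1)} \not\equiv 0\}$ needs analyticity. If $f_1^{(i-1)}$ and $f_2^{(i-1)}$ are smooth bumps with disjoint supports, then $\max_t \rank L_i(t) = 1$ while your count is $2$. The paper's phrase ``generically non-zero'' glosses over this in the same way. For merely differentiable $f_j$ you can argue monotonicity directly. If $f_j^{(i)}(t_0) \neq 0$ for all $j \in J$, then each $f_j^{(i-1)}$ with $j \in J$ is nonzero on a punctured neighborhood of $t_0$. Some nearby $t$ therefore has $\rank L_i(t) \ge |J|$, which gives $\max_t \rank L_{i+1} \le \max_t \rank L_i$.
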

\begin{proof}
Let $L_1(t) = W(t) - \frac{\d W(t)}{\d t}$. We are given that $L_1(t)$ has  rank $r_1$ and can be expressed as:
$ L_1(t) = \sum_{j=1}^{r_1} f_j(t) u_j v_j^\top $. 
Because $L_1(t)$ is formed by a sum of exactly $r_1$ rank-1 matrices and achieves a rank of exactly $r_1$, the set of constant column vectors $\{u_1, u_2, \dots, u_{r_1}\}$ must be linearly independent. Similarly, the set of constant vectors $\{v_1, v_2, \dots, v_{r_1}\}$ must also be linearly independent. If either set contained linear dependencies, the rank of their sum would be strictly less than $r_1$. 
We can rewrite $L_1(t)$ compactly in matrix form:
$ L_1(t) = U \Sigma_1(t) V^\top $
where $U = [u_1, \dots, u_{r_1}]$ and $V = [v_1, \dots, v_{r_1}]$ are $n \times r_1$ matrices with full column rank. The matrix $\Sigma_1(t) = \diag(f_1(t), \dots, f_{r_1}(t))$ is an $r_1 \times r_1$ diagonal matrix. Because $U$ and $V^\top$ have full rank, the rank of $L_1(t)$ is purely determined by the rank of $\Sigma_1(t)$, which is exactly the number of generically non-zero functions $f_j(t)$ on its diagonal. For any integer $i \geq 2$, we are evaluating the rank of the matrix difference:
$ L_i(t) = \frac{\d^{i-1} W(t)}{\d t^{i-1}} - \frac{\d^i W(t)}{\d t^i} $. Notice that $L_i(t)$ is simply the $(i-1)$-th derivative of $L_1(t)$ with respect to $t$. Because $u_j$ and $v_j$ are constant vectors, the differentiation passes through the matrices $U$ and $V^\top$ and applies strictly to the scalar functions:
$ L_i(t) = U ( \frac{\d^{i-1}}{\d t^{i-1}} \Sigma_1(t) ) V^\top = U \Sigma_i(t) V^\top $
where $\Sigma_i(t) = \diag(f_1^{(i-1)}(t), \dots, f_{r_1}^{(i-1)}(t))$. Again, because $U$ and $V^\top$ maintain full column and row rank respectively, the rank of $L_i(t)$, defined as $r_i$, is equal to the rank of the diagonal matrix $\Sigma_i(t)$. This rank corresponds to the number of generically non-zero functions in the set $\{f_1^{(i-1)}(t), \dots, f_{r_1}^{(i-1)}(t)\}$. When taking successive derivatives of functions, if a function is identically zero across the domain (i.e., $f_j^{(i-1)}(t) \equiv 0$), its derivative must also be identically zero ($f_j^{(i)}(t) \equiv 0$). It is impossible for the derivative of a zero function to yield a non-zero function.  Therefore, as $i$ increases, the number of identically zero entries on the diagonal of $\Sigma_i(t)$ can only increase or stay the same. Consequently, the number of non-zero entries (which dictates the rank) must monotonically non-increase:
$ \rank(\Sigma_1(t)) \geq \rank(\Sigma_2(t)) \geq \dots \geq \rank(\Sigma_k(t)) $. This directly proves that $r_1 \geq r_2 \geq \dots \geq r_k$.
\end{proof}

\paragraph{Linearly decomposable assumption is necessary}
Next, we will show that the linearly decomposable assumption is necessary for rank shrinking. If we don't have such assumptions, it is easy to construct some counter-examples which have that $r_1 < r_2$. Let us consider a $2 \times 2$ matrix. It is not hard to see that $L_2(t) = \frac{\d L_1(t) }{\d t}$, thus $r_2 = \rank( \frac{\d L_1(t)}{\d t} )$. Let us consider $L_1(t) = [1 ~t ; t~ t^2 ] = [ 1 ~ t]^\top [1~ t]$ which has rank-$1$. Next, we can compute $L_2(t) = [0 ~1; 1~ 2t]$. This matrix has rank-$2$. Next we generalize the idea to high-orders by using $L_1(t) =[1~ t ~ \cdots t^{n-1}]^\top [1~ t ~ \cdots t^{n-1}]$.

\begin{lemma}[rank could strictly increase after derivatives]
For any integer $k$, there exists an $n \times n$ matrix $W(t)$ (with $n \geq k$) such that if we define $L_1(t) = W(t) - \frac{\d W(t)}{\d t}$ and $L_i(t) = \frac{\d^{i-1} W(t)}{\d t^{i-1}} - \frac{\d^i W(t)}{\d t^i}$ for $i \geq 2$, the generic ranks $r_i = \rank(L_i(t))$ satisfy: $ r_k > r_{k-1} > \dots > r_2 > r_1 $.
\end{lemma}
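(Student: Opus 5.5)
We take the natural generalization of the $2\times 2$ example above. Fix $n \geq k$, set $v(t) = [1~ t~ \cdots~ t^{n-1}]^\top$, and aim for $L_1(t) = v(t)v(t)^\top$. The plan is to show that the generic rank of $L_i(t)$ is exactly $i$ for every $i\in[k]$. This gives $r_1 < r_2 < \dots < r_k$.

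\textbf{Step 1 (existence of $W$).} We need $W$ with $W - \frac{\d W}{\d t} = L_1$. Since $L_1$ is a polynomial matrix in $t$, we take
\[
W(t) = \sum_{m \geq 0} \frac{\d^m L_1(t)}{\d t^m},
\]
which is a finite sum. Then $W - W'$ telescopes to $L_1$. As already observed in the text, $L_i = \frac{\d^{i-1}}{\d t^{i-1}} L_1$, so only $L_1$ matters from here on.

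\textbf{Step 2 (factorization via the General Leibniz Rule).} Let $m = i-1$. Leibniz gives
\[
L_i(t) = \sum_{a+b=m} \binom{m}{a} v^{(a)}(t)\, v^{(b)}(t)^\top = V_m(t)\, B_m\, V_m(t)^\top .
\]
Here $V_m(t) = [v, v', \dots, v^{(m)}]$ is $n\times(m+1)$, and $B_m$ is the $(m+1)\times(m+1)$ anti-diagonal matrix with entries $(B_m)_{a,m-a} = \binom{m}{a} \neq 0$. Consequently $B_m$ is invertible, and $\rank L_i(t) \leq m+1 = i$ for every $t$.

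\textbf{Step 3 (generic full column rank of $V_m$).} We will show that $V_m(t)$ has full column rank $m+1$ for generic $t$ whenever $m+1 \leq n$.
\begin{itemize}
\item At $t=0$ we have $v^{(a)}(0) = a!\, e_{a+1}$. So $V_m(0)$ contains an $(m+1)\times(m+1)$ diagonal minor with nonzero entries.
\item That minor is a polynomial in $t$, nonzero at $0$, hence nonzero for all but finitely many $t$.
\end{itemize}

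\textbf{Step 4 (conclusion).} When $V$ has full column rank and $B$ is invertible, $VBV^\top$ has rank $m+1$. To see this, note that $V$ is injective and $V^\top$ is surjective onto $\R^{m+1}$. Hence $r_i = i$ generically for all $i \leq k \leq n$.

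The main point needing care is Step 2/4: showing that the Leibniz sum does not suffer cancellation. The factorization through the invertible anti-diagonal $B_m$ handles this cleanly, and evaluating at $t=0$ makes the independence of the derivative vectors immediate.
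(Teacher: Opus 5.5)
Your proposal is correct and follows essentially the same route as the paper: the same rank-one seed $L_1 = v v^\top$ with $v = [1, t, \dots, t^{n-1}]^\top$, and the Leibniz factorization through an invertible $i \times i$ middle matrix (your anti-diagonal $B_m$ is the paper's diagonal $D_i$ with the columns of $U_i$ reversed). Like the paper, you then evaluate at $t=0$ using $v^{(a)}(0) = a!\, e_{a+1}$ to get $r_i = i$; your explicit telescoping choice $W = \sum_{m \ge 0} L_1^{(m)}$ and your nonvanishing-minor argument for genericity only make precise what the paper handles by citing the ODE solution and the fact that rank at a point lower-bounds generic rank.
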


\begin{proof}
Let $n \geq k$. We define $W(t)$ to be the solution to the differential equation $W(t) - W'(t) = L_1(t)$, where $L_1(t) = u(t) u(t)^\top$ and $u(t) = [1, t, t^2, \dots, t^{n-1}]^\top$. Because $L_1(t)$ is formed by the outer product of a single non-zero vector function, its generic rank is $r_1 = 1$. For any $i \geq 2$, we have $L_i(t) = \frac{\d^{i-1}}{\d t^{i-1}} L_1(t)$. By the General Leibniz Rule, the $(i-1)$-th derivative of the outer product $u(t) u(t)^\top$ is:
$ L_i(t) = \sum_{j=0}^{i-1} \binom{i-1}{j} u^{(i-1-j)}(t) {u^{(j)}(t)}^\top $. 
We can express this summation compactly in matrix form as: $ L_i(t) = U_i(t) D_i V_i(t)^\top $.
Here $U_i(t) = [ u^{(i-1)}(t) ~ u^{(i-2)}(t) ~ \dots ~ u^{(0)}(t) ]$ is an $n \times i$ matrix.  $V_i(t) = [ u^{(0)}(t) ~ u^{(1)}(t) ~ \dots ~ u^{(i-1)}(t) ]$ is an $n \times i$ matrix. $D_i = \diag( \binom{i-1}{0}, \binom{i-1}{1}, \dots, \binom{i-1}{i-1} )$ is an $i \times i$ diagonal matrix. 
To determine the generic rank of $L_i(t)$, it is sufficient to evaluate its rank at $t = 0$. The $m$-th derivative of $u(t)$ evaluated at $t=0$ isolates the $(m+1)$-th component of the vector. Specifically, $u^{(m)}(0) = m! e_{m+1}$, where $e_{m+1}$ is the standard basis vector in $\mathbb{R}^n$. 
Evaluating our component matrices at $t=0$, we obtain:
$ U_i(0) = [ (i-1)! e_i ~ (i-2)! e_{i-1} ~ \dots ~ 0! e_1 ] $ and
$ V_i(0) = [ 0! e_1 ~ 1! e_2 ~ \dots ~ (i-1)! e_i ] $. Because $i \leq k \leq n$, the columns of $U_i(0)$ are non-zero scalar multiples of distinct standard basis vectors $\{e_1, e_2, \dots, e_i\}$. Therefore, $U_i(0)$ has full column rank $i$. By the exact same logic, $V_i(0)^\top$ has full row rank $i$. Furthermore, because all binomial coefficients are strictly positive, the diagonal matrix $D_i$ is non-singular and has rank $i$. Because $L_i(0)$ is the product of an $n \times i$ matrix of rank $i$, an $i \times i$ matrix of rank $i$, and an $i \times n$ matrix of rank $i$, it follows that:
$ \rank(L_i(0)) = i $.  
Since the rank of a matrix function evaluated at a specific point serves as a lower bound for its generic rank over the function space, and $L_i(t)$ is factored through an $i \times i$ inner dimension, the generic rank of $L_i(t)$ is exactly $r_i = i$.  
Evaluating this for $i = 1, 2, \dots, k$, we obtain $r_i = i$, which directly yields:
$ r_k > r_{k-1} > \dots > r_2 > r_1 $
completing the proof.
\end{proof}

\paragraph{The matrices \texorpdfstring{$L_1, L_2, \cdots, L_k$}{} can be constructed via solving ODE}

Let the target low-rank matrix be $L_1(t)$. Your condition is: $ W(t) - \frac{\d W(t)}{\d t} = L_1(t)$. Rearranging this gives a standard first-order linear ordinary differential equation (ODE) for the matrix $W(t)$: $\frac{\d W(t)}{\d t} - W(t) = -L_1(t)$. Using the integrating factor $e^{-t}$, the general solution to this equation is: $ W(t) = e^t C - e^t \int e^{-t} L_1(t) \d t $
where $C$ is an arbitrary $n \times n$ constant matrix (which you can choose to be full-rank so that $W(t)$ is not trivially a low-rank matrix itself).

{Constant low-rank difference.}
If we just need the difference to be a constant low-rank matrix $L_1$, the integration becomes trivial. Let $L_1 = \sum_{i=1}^r u_i v_i^{\top}$ where $u_i$ and $v_i$ are column vectors, and $r \ll n$ is your desired low rank. 
Plugging a constant $L$ into our integral gives: $W(t) = e^t C - e^t (-e^{-t} L_1) = e^t C + L_1$. Let us verify the solutions: 1) $W(t) = e^t C + L_1$; 2) $\frac{\d W(t)}{\d t} = e^t C$; 3) $W(t) - \frac{\d W(t)}{\d t} = (e^t C + L_1) - e^t C = L_1$. Because $L_1$ is constructed as a rank-$r$ matrix, the condition is perfectly satisfied.

{Time-varying low-rank difference.} 
If we want the low-rank matrix to change with time, we can choose an $L_1(t)$ that is easy to integrate. Let's design $W(t)$ such that the difference is a rank-1 matrix that scales with $t$. Let $u$ and $v$ be constant $n \times 1$ vectors. We define our matrix as: $ W(t) = e^t C + t u v^{\top} $. Let us verify the solutions: 1) $W(t) = e^t C + t u v^{\top}$; 2) The derivative with respect to $t$ is: $\frac{\d W(t)}{\d t} = e^t C + u v^{\top}$; 3) Subtracting the two: $ W(t) - \frac{\d W(t)}{\d t} = (e^t C + t u v^{\top}) - (e^t C + u v^{\top}) = (t - 1) u v^{\top} $.

The above idea can be easily generalized to higher orders. Similarly, using the ODE formulation, we know that $W(t)$ should include a component $e^t C$ where $C$ is a constant $n \times n$ matrix. Then, we let $W(t) = e^t C + t^k u v^\top$. Let us verify the solution $\frac{\d^i W(t)}{\d t^i} = e^t C + \frac{k!}{(k-i)!} t^{k-i} u v^\top$ for all $i =1, \dots, k$. Further we know that, $L_i(t)= \frac{\d^{i-1} W(t)}{\d t^{i-1} }- \frac{\d^{i} W(t)}{\d t^{i} } = \frac{k!}{(k-i+1)!} t^{k-i} (t - k + i - 1) u v^\top$ for all $i =1, \dots, k$.

\subsection{Rank pattern of different derivatives}

Let $q : [k] \rightarrow [n]$ be an arbitrary sequence of target ranks, and let $L_{i}(t) = \frac{\d L_{i-1}(t)}{\d t}$ for all $i \in \{2, \dots, k\}$. A natural question arises: is it possible to construct an initial matrix $L_1(t)$ such that its successive derivatives achieve exactly these generic ranks, i.e., $\rank(L_i(t)) = q(i)$ for all $i \in [k]$? This is generally impossible because the generic rank of a matrix derivative is strictly bounded by twice the generic rank of the preceding matrix ($\rank(L_i(t)) \leq 2 \rank(L_{i-1}(t))$).
\begin{fact}[growth of rank]
Let $L_{i-1}(t)$ be an $n \times n$ matrix function and let $L_i(t) = \frac{\d L_{i-1}(t)}{\d t}$. The generic rank of $L_i(t)$ is bounded by: $ 0 \leq \rank(L_i(t)) \leq 2 \rank(L_{i-1}(t)) $.
\end{fact}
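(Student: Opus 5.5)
We plan to write $L_{i-1}(t)$ through a rank factorization and then differentiate it with the product rule, exactly as in the Leibniz-rule computation of the previous lemma. Set $r = \rank(L_{i-1}(t))$, meaning the generic rank: the maximal rank attained over the domain, which is attained on an open dense set when the entries are analytic (e.g.\ polynomial or exponential-polynomial). The lower bound $0 \leq \rank(L_i(t))$ is trivial. Only the upper bound needs work.

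\textbf{Step 1: a smooth rank factorization.} On an open interval $I$ where $L_{i-1}(t)$ has constant rank $r$, we aim for
\[
L_{i-1}(t) = A(t) B(t)^\top,
\]
where $A(t), B(t)$ are $n \times r$ and differentiable in $t$. To get this, fix $r$ rows and $r$ columns whose $r \times r$ minor $M(t)$ is non-vanishing at some $t_0$, hence on a neighborhood $I$ of $t_0$. Let $C(t)$ be the selected $n \times r$ column block and $R(t)$ the selected $r \times n$ row block. The standard skeleton identity for a rank-$r$ matrix gives $L_{i-1}(t) = C(t) M(t)^{-1} R(t)$. So we may take $A(t) = C(t)M(t)^{-1}$ and $B(t)^\top = R(t)$, both differentiable on $I$ because $M(t)^{-1}$ is.

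\textbf{Step 2: differentiate.} By the product rule,
\[
L_i(t) = A'(t) B(t)^\top + A(t) B'(t)^\top = [A'(t) ~ A(t)] \, [B(t) ~ B'(t)]^\top .
\]
This factors through an inner dimension $2r$, so $\rank(L_i(t)) \leq 2r$ for every $t \in I$.

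\textbf{Step 3: pass to generic rank.} This is the main obstacle, since the factorization is only local. By analyticity, the generic rank of $L_i$ equals its rank at any point of an open dense set. The set where $L_{i-1}$ has rank exactly $r$ is also open and dense, so the two sets intersect. At a point $t$ in that intersection, Step 2 gives $\rank(L_i(t)) \leq 2r$, which is the claimed bound.

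If we want to avoid analyticity, we can state the result for the maximal rank over any interval on which $L_{i-1}$ has constant rank. Alternatively, we can prove it in the field of meromorphic functions, where $C M^{-1} R$ is a genuine factorization over the field. Over that field, the derivative map obeys the Leibniz rule and the same block identity holds verbatim.
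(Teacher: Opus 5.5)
Your proof is correct and follows the paper's argument: factor $L_{i-1} = U V^\top$ through inner dimension $r$, apply the product rule, and note that $L_i = [U' ~ U][V ~ V']^\top$ factors through dimension $2r$ (the paper phrases this as the column space of $L_i$ lying in the span of the columns of $U'$ and $U$). Your skeleton-identity construction $C M^{-1} R$ and the density argument in Step 3 add rigor to the global rank factorization ``over the field of functions'' that the paper simply assumes.
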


\begin{proof}
We prove the lower bound and upper bound separately.

 {Lower Bound:} 
The lower bound is trivially $0$ and is completely decoupled from the rank of $L_{i-1}(t)$. Consider the case where $L_{i-1}(t) = C$, where $C$ is a constant $n \times n$ matrix of arbitrary rank $r \leq n$. Because $C$ is constant, its derivative with respect to $t$ is the zero matrix: $ L_i(t) = \frac{\d C}{\d t} = \mathbf{0}_{n \times n}$. 
The rank of the zero matrix is exactly $0$. Therefore, the rank can drop from any arbitrary $r$ to $0$ in a single derivative step, establishing $0$ as the tightest possible universal lower bound.

{Upper Bound:}
Let $r = \rank(L_{i-1}(t))$. Over the field of functions, any $n \times n$ matrix $L_{i-1}(t)$ of rank $r$ admits a rank factorization: $ L_{i-1}(t) = U(t) V(t)^\top$
where $U(t)$ and $V(t)$ are $n \times r$ matrices. Applying the matrix product rule to compute the first derivative yields:$ L_i(t) = \frac{\d L_{i-1}(t)}{\d t} = U'(t) V(t)^\top + U(t) {V'(t)}^\top $. 
The column space of $L_i(t)$ is spanned entirely by linear combinations of the columns of $U'(t)$ and the columns of $U(t)$. Because both $U'(t)$ and $U(t)$ have exactly $r$ columns, the dimension of their combined column space cannot exceed $2r$. Therefore, the maximum possible generic rank of the derivative is bounded by: $\rank(L_i(t)) \leq 2r = 2 \rank(L_{i-1}(t))$. 
This completes the proof of both bounds.
\end{proof}

While the preceding claim does not hold for generic rank over the function space, it becomes achievable if we evaluate the matrix ranks at a specific stationary point, such as $t=0$. We present this simplified result below.
\begin{lemma}[positive result, rank matching at $t=0$] 
Let $q : [k] \rightarrow [n]$. Let $L_{i}(t) = \frac{\d L_{i-1}(t)}{\d t}$ for all $i=2, \cdots, k$. It is possible to design an $n \times n$ matrix function $L_1(t)$ such that for all $i\in [k]$, the rank evaluated at $t=0$ satisfies $\rank(L_i(0)) = q(i)$.
\end{lemma}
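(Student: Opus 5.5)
The plan is to make $L_1(t)$ a polynomial matrix whose Taylor coefficients at $t=0$ are prescribed constant matrices. The key observation is that $L_i(t) = \frac{\d^{i-1} L_1(t)}{\d t^{i-1}}$ for all $i \in [k]$, which follows by induction from the recursive definition $L_i = L_{i-1}'$. If we write
\begin{align*}
L_1(t) = \sum_{m=0}^{k-1} \frac{t^m}{m!} M_m
\end{align*}
with constant $n \times n$ matrices $M_0, \dots, M_{k-1}$, then differentiating $i-1$ times and setting $t=0$ kills every term except the one with $m = i-1$. Hence $L_i(0) = M_{i-1}$. Because the matrices $M_m$ can be chosen independently, the rank pattern at $t=0$ becomes completely free.

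The construction is then concrete. For each $i \in [k]$, set
\begin{align*}
M_{i-1} = \sum_{j=1}^{q(i)} e_j e_j^\top ,
\end{align*}
the diagonal projection onto the first $q(i)$ standard basis vectors. This is well defined because $q(i) \in [n]$, and it has rank exactly $q(i)$. (If the convention $[n]$ were to include $0$, we would simply take $M_{i-1} = \mathbf{0}$.) Then $\rank(L_i(0)) = \rank(M_{i-1}) = q(i)$ for all $i \in [k]$, which is the claim.

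The only step needing care is the evaluation
\begin{align*}
\frac{\d^{i-1}}{\d t^{i-1}} \frac{t^m}{m!} \Big|_{t=0} = \mathbf{1}[m = i-1].
\end{align*}
This is a routine monomial computation. Terms with $m < i-1$ vanish identically after differentiation, and terms with $m > i-1$ retain a positive power of $t$, so they vanish at $0$.

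There is no real obstacle here. The conceptual point worth remarking on is why this does not contradict Fact (growth of rank). That bound constrains \emph{generic} ranks over the function field, whereas pointwise evaluation at $t=0$ picks out a single Taylor coefficient and decouples the ranks entirely. For example, $q(1)=1, q(2)=n$ is attainable at $t=0$ even when $n > 2$.
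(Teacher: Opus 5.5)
Your proposal is correct and follows essentially the paper's proof. The paper also takes $L_1(t) = \sum_{j=1}^k \frac{t^{j-1}}{(j-1)!} C_j$ with $\rank(C_j) = q(j)$ (your $M_{j-1}$), so that differentiating $i-1$ times and setting $t=0$ gives $L_i(0) = C_i$, exactly as in your argument.
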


\begin{proof}
Let $C_1, C_2, \dots, C_k$ be a sequence of constant $n \times n$ matrices chosen strictly such that $\rank(C_i) = q(i)$ for each $i \in [k]$. Such matrices always exist since $q(i) \le n$. We construct $L_1(t)$ as a matrix polynomial: $ L_1(t) = \sum_{j=1}^k \frac{t^{j-1}}{(j-1)!} C_j $. 
For any $i \in [k]$, the $(i-1)$-th derivative of $L_1(t)$ is given by:
$ L_i(t) = \frac{\d^{i-1} L_1(t)}{\d t^{i-1}} = \sum_{j=i}^k \frac{t^{j-i}}{(j-i)!} C_j $. 
Evaluating this derivative matrix at $t=0$, all terms containing $t$ vanish completely. The only remaining term is the first term of the sum where $j=i$: $ L_i(0) = C_i$. 
Consequently, $\rank(L_i(0)) = \rank(C_i) = q(i)$. This holds true for all $i \in [k]$, perfectly satisfying any arbitrary function $q$ and completing the proof.
\end{proof}

\begin{lemma}[negative result] 
Let $k \ge 3$ and $n \ge 4$. There exists a sequence $q : [k] \rightarrow [n]$ satisfying the step-wise bound $q(i) \leq 2q(i-1)$ for all $i \in \{2, \dots, k\}$, such that for the successive derivatives $L_{i}(t) = \frac{\d L_{i-1}(t)}{\d t}$, no $n \times n$ matrix function $L_1(t)$ can achieve the generic ranks $\rank(L_i(t)) = q(i)$ for all $i \in [k]$.
\end{lemma}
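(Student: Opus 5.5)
The plan is to exhibit one explicit sequence $q$ that respects the step-wise bound of the Fact on growth of rank but violates a different structural constraint. The cleanest such constraint is: if the generic rank of $L_i(t)$ is $0$, then $L_i \equiv 0$, hence every later derivative vanishes, so $q(j)=0$ for all $j>i$. This does not apply directly, because $q$ takes values in $[n]$ and so is never $0$. We therefore need a subtler obstruction using only positive ranks. The natural candidate is that rank $1$ cannot be followed by a jump to rank $2$ and then by a later return to a large rank. Instead of searching for that, we use a cleaner constraint on constant-rank behavior.

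The key observation is that generic rank is measured over the field of rational (or meromorphic) functions. Take $q(1)=1$, $q(2)=2$, $q(3)=4$. Each step satisfies $q(i)\le 2q(i-1)$, and $n\ge 4$ ensures $q(3)\le n$. For $k>3$, extend by $q(i)=4$ for $i\ge 4$ (still valid). We claim that $\rank L_3 \le 3$ whenever $\rank L_1 = 1$.

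The argument writes $L_1(t)=u(t)v(t)^\top$ as a rank factorization over the function field, so $u,v$ are nonzero vector functions. The General Leibniz Rule, as in the earlier lemma on strictly increasing rank, gives
\begin{equation*}
L_3 = u''v^\top + 2u'v'^\top + u v''^\top .
\end{equation*}
Naively this has three terms, so the rank is at most $3<4$. The column space of $L_3$ lies in $\mathrm{span}\{u,u',u''\}$, which has dimension at most $3$ over the function field. Hence $\rank L_3\le 3 < 4 = q(3)$, and no $L_1$ can realize $q$, which proves the lemma. The point worth emphasizing is that iterating the Fact only gives $\rank L_3\le 2\rank L_2\le 4$. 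The true bound comes from $L_3$ being the second derivative of $L_1$, not merely the first derivative of $L_2$. In general, $\rank L_i \le i\cdot \rank L_1$, and this is strictly tighter than $2^{i-1}\rank L_1$ once $i\ge 3$.

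The main step to verify carefully is that the rank factorization $L_1=uv^\top$ exists with $u,v$ differentiable. If $L_1$ is, say, a matrix of smooth functions with generic rank $1$, we can choose a nonzero column $u$ and express the other columns as $c_j(t)u(t)$ with $c_j$ meromorphic or rational (in the polynomial case). Differentiation stays within that field, so the Leibniz computation is legitimate wherever the functions are defined. Generic rank is then determined on an open dense set, and the span argument bounds it by $3$. This factorization is the only delicate point; the rest is the three-term Leibniz expansion.
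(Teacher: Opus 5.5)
Your proof is correct and is essentially the paper's argument: the same target $q(1)=1$, $q(2)=2$, $q(3)=4$, the rank-one factorization $L_1 = uv^\top$, and the three-term Leibniz expansion $L_3 = u''v^\top + 2u'v'^\top + uv''^\top$, which forces $\rank L_3 \le 3 < 4$. Your padding $q(i)=4$ for $i \ge 4$ is a small improvement, because the paper's proof only treats $k=3$ explicitly even though the lemma is stated for all $k \ge 3$.
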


\begin{proof}
We prove this lemma by explicitly constructing a target sequence $q$ that satisfies the step-wise condition $q(i) \leq 2q(i-1)$, but cannot be achieved by any matrix function $L_1(t)$ due to the global bounds imposed by the General Leibniz Rule.

Let $k=3$ and $n \ge 4$. Consider the target rank sequence defined by $q(1) = 1$, $q(2) = 2$, and $q(3) = 4$. This sequence strictly satisfies the step-wise premise, as $q(2) \leq 2q(1)$ and $q(3) \leq 2q(2)$.

Assume for the sake of contradiction that there exists an $n \times n$ matrix function $L_1(t)$ that achieves these generic ranks. Because $\rank(L_1(t)) = q(1) = 1$, we can factorize it over the field of functions as the outer product of two vectors: $L_1(t) = u(t) v(t)^\top $
where $u(t)$ and $v(t)$ are $n \times 1$ vector functions.

The matrix $L_3(t)$ is defined as the second derivative of $L_1(t)$. Applying the General Leibniz Rule to the outer product, we obtain:
$ L_3(t) = \frac{\d^2 L_1(t)}{\d t^2} = u''(t) v(t)^\top + 2 u'(t) v'(t)^\top + u(t) v''(t)^\top $.

The resulting matrix $L_3(t)$ is expressed entirely as the sum of exactly three rank-1 matrices. By the subadditivity of matrix rank, the generic rank of a sum is bounded by the sum of the ranks. Therefore, the maximum possible generic rank of $L_3(t)$ is: $ \rank(L_3(t)) \leq \rank(u''v^\top) + \rank(u'v'^\top) + \rank(uv''^\top) \leq 1 + 1 + 1 = 3 $.

However, the target sequence requires $\rank(L_3(t)) = q(3) = 4$. Since $4 \leq 3$ is an impossibility, no such matrix function $L_1(t)$ can exist. This contradiction proves that the step-wise bound is insufficient to guarantee the existence of $L_1(t)$, thus completing the proof.
\end{proof}

\begin{fact}[high-order growth of rank]
Let $L_{i-1}(t)$ be an $n \times n$ matrix function and let $L_i(t) = \frac{\d L_{i-1}(t)}{\d t}$. The generic rank of $L_i(t)$ is bounded by:
$ 0 \leq \rank(L_i(t)) \leq i \cdot \rank(L_{1}(t))$.
\end{fact}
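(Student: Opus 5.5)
The idea is to generalize the argument of the negative-result lemma from the second derivative to the $(i-1)$-th derivative via the General Leibniz Rule. The statement implicitly uses the chain $L_j(t) = \frac{\d L_{j-1}(t)}{\d t}$ for all $j \le i$, so that $L_i(t) = \frac{\d^{i-1} L_1(t)}{\d t^{i-1}}$. The lower bound $0$ is the same as in the first growth-of-rank fact: taking $L_{i-1}(t)$ constant makes $L_i(t)$ the zero matrix. So all the work is in the upper bound.

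For the upper bound, let $r = \rank(L_1(t))$ be its generic rank over the field of (say rational or meromorphic) functions of $t$. As in the proof of the first growth-of-rank fact, $L_1$ has a rank factorization $L_1(t) = U(t) V(t)^\top$ with $U(t), V(t)$ of size $n \times r$. Equivalently, $L_1(t) = \sum_{\ell=1}^r u_\ell(t) v_\ell(t)^\top$. Differentiating $i-1$ times and applying the General Leibniz Rule to each rank-one term gives
\[
L_i(t) = \sum_{\ell=1}^{r} \sum_{j=0}^{i-1} \binom{i-1}{j} u_\ell^{(i-1-j)}(t) \, {v_\ell^{(j)}(t)}^\top .
\]
This expresses $L_i(t)$ as a sum of exactly $i r$ rank-at-most-one matrices. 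By subadditivity of rank, exactly as in the proof of the negative-result lemma, we get $\rank(L_i(t)) \le i r = i \cdot \rank(L_1(t))$.

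The only delicate step is justifying that the factors $U(t), V(t)$ are differentiable $i-1$ times. A rank factorization over the function field has rational (in general, meromorphic) entries in the entries of $L_1$. For instance, we can take $U$ to be $r$ generically independent columns of $L_1$, and $V^\top$ the coefficients expressing all columns in terms of them, obtained by Cramer's rule. These factors are smooth away from the zero set of a nonvanishing minor. The generic-rank bound therefore holds on an open dense set, which is exactly what generic rank means, and so it holds generically. An alternative avoids factorization altogether: the column space of $L_i$ lies in the span of the columns of $U, U', \dots, U^{(i-1)}$. That gives the same $ir$ count, together with the matching bound on the row side, and is consistent with the $2r$ bound of the first fact when $i=2$.
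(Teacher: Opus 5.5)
Your proof is correct and is essentially the paper's argument: factor $L_1 = UV^\top$ over the function field, expand $L_i = \frac{\d^{i-1}}{\d t^{i-1}}(UV^\top)$ by the General Leibniz Rule, and bound the rank by subadditivity (the paper counts $i$ terms of rank at most $r$ where you count $ir$ rank-one terms, which gives the same bound). Your extra remark that the factors can be taken rational via Cramer's rule settles a regularity point the paper passes over; note that your closing ``alternative'' still uses the factor $U$, so it does not actually avoid the factorization.
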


\begin{proof}
We prove the lower bound and upper bound separately.

 {Lower Bound:} 
The lower bound is trivially $0$. If $L_1(t) = C$, where $C$ is a constant $n \times n$ matrix of arbitrary rank $r$, its first derivative $L_2(t)$ is the zero matrix. Consequently, all higher-order derivatives $L_i(t)$ for $i \ge 2$ will also be the zero matrix, which has a generic rank of exactly $0$. 

 {Upper Bound:}
Let $r = \rank(L_1(t))$. Over the field of functions, any $n \times n$ matrix function $L_1(t)$ of generic rank $r$ admits a global rank factorization:
$ L_1(t) = U(t) V(t)^\top $
where $U(t)$ and $V(t)$ are $n \times r$ matrix functions. 
By the definition provided, $L_i(t)$ represents the $(i-1)$-th derivative of $L_1(t)$ with respect to $t$. Applying the General Leibniz Rule to the matrix product $U(t) V(t)^\top$, we can express $L_i(t)$ as:
\[ L_i(t) = \frac{\d^{i-1}}{\d t^{i-1}} \left( U(t) V(t)^\top \right) = \sum_{j=0}^{i-1} \binom{i-1}{j} U^{(i-1-j)}(t) {V^{(j)}(t)}^\top. \]

The resulting expression for $L_i(t)$ is a summation of exactly $i$ distinct terms (indexed from $j=0$ to $i-1$). Each term in this summation is the product of a scalar binomial coefficient, an $n \times r$ matrix $U^{(i-1-j)}(t)$, and an $r \times n$ matrix ${V^{(j)}(t)}^\top$. Because the inner dimension of this product is $r$, the generic rank of each individual term in the sum is bounded above by $r$.

By the subadditivity property of matrix rank, the rank of a sum of matrices is less than or equal to the sum of their individual ranks. Therefore, we can bound the generic rank of $L_i(t)$ by summing the maximum possible ranks of its $i$ components:
\[ \rank(L_i(t)) \leq \sum_{j=0}^{i-1} \rank\left( U^{(i-1-j)}(t) {V^{(j)}(t)}^\top \right) \leq \sum_{j=0}^{i-1} r = i \cdot r. \]

Substituting back $r = \rank(L_1(t))$, we obtain the global linear bound: $ \rank(L_i(t)) \leq i \cdot \rank(L_{1}(t))$.  This completes the proof.
\end{proof}

\begin{lemma}[positive result, rank matching]\label{lem:rank_matching}
Let $q : [k] \rightarrow [n]$ with for any $i < j$ it satisfies $q(j) \leq (j-i+1) q(i)$. Let $L_{i}(t) = \frac{\d L_{i-1}(t)}{\d t}$ for all $i=2, \cdots, k$. For any $q$, it is possible to design an $n \times n$ matrix function $L_1(t)$ such that for all $i\in [k]$,  $\rank(L_i(t)) = q(i)$.
\end{lemma}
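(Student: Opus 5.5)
The plan is a block-diagonal construction. Generic rank is additive over direct sums, so it suffices to split the target sequence $q$ into simple profiles and realize each profile by one small block of $L_1(t)$. The blocks are built from the same kind of matrices as in the proof that rank can strictly increase, $u(t)u(t)^\top$ with $u(t)=[1,t,\dots,t^{m-1}]^\top$, optionally multiplied by a scalar function $g(t)$.

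\textbf{Step 1 (building blocks).} I will analyse $B(t)=g(t)\,u_m(t)u_m(t)^\top$ on an $m\times m$ block.
\begin{itemize}
\item With $g(t)=e^{t}$, the Leibniz expansion used in the earlier lemma applies. Evaluating at $t=0$ as in that proof, the $(i-1)$-th derivative should have generic rank $\min(i,m)$: the rank rises by one per derivative and then saturates. The exponential is there so that the polynomial degrees never run out.
\item With $g$ a polynomial of chosen degree instead, every entry is a polynomial. After enough derivatives the high-degree part vanishes and the rank drops.
\item By choosing which monomials survive (for example $u=[t^{a},\dots,t^{a+m-1}]$ truncated appropriately), the target is a one-block profile $c(i)$ with $c(1)=1$ and $c(i+1)\le c(i)+1$. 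After it peaks, it may decrease by arbitrary amounts, including to $0$.
\end{itemize}
For a decrease to an arbitrary positive value I would use the rank-matching-at-$t=0$ lemma as a guide. Adding terms $\frac{t^{j-1}}{(j-1)!}C_j$ with constant $C_j$ inside a block controls the rank at $t=0$. Combined with the $t=0$ lower-bound argument and a matching upper bound from the factorization, this pins down the generic rank.

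\textbf{Step 2 (decomposition of $q$).} Every block that is nonzero at order $i$ contributes at least $1$ to $\rank(L_1)$, because by Fact (high-order growth of rank) a block of rank $0$ at $i=1$ is identically zero forever. So I will write
\[
q(i)=\sum_{s=1}^{q(1)} c_s(i),
\]
where each profile satisfies $c_s(1)=1$, grows by at most one per step, and is otherwise arbitrary. I will build the $c_s$ greedily, index by index. At step $i\to i+1$ I raise the largest profiles by one as long as the total is below $q(i+1)$, or lower profiles (possibly to zero) if $q(i+1)<q(i)$.
\begin{itemize}
\item A lowered profile cannot later be raised past its own one-step bound. So when $q$ climbs again after a dip, only the profiles still alive can grow.
\item This is exactly where the hypothesis $q(j)\le (j-i+1)q(i)$ for all $i<j$ must be used. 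Applied with $i$ at each dip, it says there are enough live profiles (at least $q(i)$ of them, each growing by at most one per step) to reach $q(j)$.
\item To make this work I would keep the profiles as equal as possible, and drop rank by lowering profile heights rather than killing profiles whenever $q$ later rises.
\end{itemize}

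\textbf{Step 3 (assembly).} I set $L_1(t)=\diag(B_1(t),\dots,B_{q(1)}(t),0)$, pad with zeros, and read off $\rank(L_i)=\sum_s c_s(i)=q(i)$.

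\textbf{Main obstacle.} I expect the hard part to be the interaction of Step 2 with the dimension budget $n$. The sum of the block sizes $\max_i c_s(i)$ may exceed $n$ even though every $q(i)\le n$, since different profiles peak at different indices. Block-diagonal sums then do not fit. The first thing I would try is to let profiles that peak at different times share coordinates: place blocks in overlapping but generically positioned subspaces, and argue via evaluation at a suitable point $t_0$, as in the earlier lemmas, that the ranks still add. Failing that, I would need to check whether the statement implicitly requires the extra condition $\sum_s \max_i c_s(i)\le n$, which would indicate the lemma needs an additional hypothesis or a different construction.
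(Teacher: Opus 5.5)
There is a genuine gap, and it cannot be closed, because the statement as written is false. The step that fails is your Step~1 claim that a block with $c(1)=1$ can realize any profile that grows by at most one per step. Step~2 relies on this when it lets profiles plateau or dip and later grow again.

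Work over the field $K$ of rational (or meromorphic) functions in $t$, as the paper's factorization arguments do, and write such a block as $L_1=uv^\top$ with $u,v\in K^n\setminus\{0\}$. Let $a$ be the least index with $u^{(a)}\in\mathrm{span}_K(u,\dots,u^{(a-1)})$. Differentiating that relation shows every higher derivative stays in the span, so $\dim\mathrm{span}_K(u,\dots,u^{(i-1)})=\min(i,a)$. Define $b$ likewise for $v$ and put $m=\min(a,b)$.

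Write $L_i=\sum_{j=0}^{i-1}\binom{i-1}{j}u^{(i-1-j)}{v^{(j)}}^\top=U_iD_iV_i^\top$ as in the strict-increase lemma. For $i\le m$ both outer factors have full column rank $i$, so $\rank L_i=i$. For every $i$ the column and row spaces lie in spans of dimension $a$ and $b$, so $\rank L_i\le m$. Hence the profile of such a block is forced to be $1,2,\dots,m$, and once it stops climbing it can never exceed $m$ again. For example, $(1,1,2)$ and $(1,2,2,3)$ are not realizable by any block of rank one at $i=1$. Your step of raising the live profiles after a dip therefore has nothing it can legally raise. Your $e^{t}u_mu_m^\top$ analysis is correct, but growth can only happen in that initial climb.

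This breaks the lemma itself, not just your route. Take $k\ge 3$, $n\ge 2$, and $q=(1,1,2,2,\dots,2)$. It satisfies $q(j)\le(j-i+1)q(i)$ for all $i<j$; the only tight pair is $q(3)\le 2q(2)$. Any $L_1$ of rank one is a single term $uv^\top$. Then $\rank L_2=1$ forces $m=1$, so $\rank L_3\le 1<q(3)$, for matrices of any size. The pairwise Leibniz bound is therefore necessary but not sufficient.

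The paper's proof does not avoid this. It builds $L_1$ block-diagonally from the antiderivatives $B_{m,d}$ of $u_mu_m^\top$. It then asserts without argument that every $q$ obeying the bound is a sum of their rank profiles, which is false for this $q$. It also concedes the dimension problem you flag, with ``provided the dimension $n$ is sufficiently large''. A true version needs a stronger hypothesis, for example that $q$ be a sum of realizable block profiles whose block sizes total at most $n$.
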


\begin{proof}
We proceed by constructive proof. The condition $q(j) \leq (j-i+1) q(i)$ corresponds exactly to the maximum rank expansion permitted by the General Leibniz Rule when taking the $(j-i)$-th derivative of a matrix function. Because this condition is satisfied globally for all $i < j$, we can achieve the exact rank sequence $q$ by decomposing it into a sum of valid base rank sequences and constructing $L_1(t)$ as a block-diagonal matrix. 
Let us define a family of base matrix blocks $B_{m, d}(t)$. We start with the core polynomial outer-product block $B_m(t) = u_m(t) u_m(t)^\top$, where $u_m(t) = [1, t, t^2, \dots, t^{m-1}]^\top$. The generic rank of the $r$-th derivative of $B_m(t)$ follows a known, deterministic sequence. We then define $B_{m, d}(t)$ as the $d$-th integral of $B_m(t)$ with respect to $t$, setting all constants of integration to zero. Integrating the polynomial matrix shifts its derivative rank sequence to the right, buffering the initial states with full-rank polynomial matrices. 
The generic rank profile of each base block $B_{m, d}(t)$ inherently satisfies the Leibniz bound. Because the target sequence $q$ strictly obeys this identical bound, $q$ can be mathematically decomposed into a finite sum of these base rank profiles:
\[ q(i) = \sum_{s=1}^S \rank\left( \frac{\d^{i-1}}{\d t^{i-1}} B_{m_s, d_s}(t) \right) \]
for a chosen set of structural parameters $\{(m_1, d_1), \dots, (m_S, d_S)\}$. 
We construct our target matrix $L_1(t)$ by concatenating these corresponding blocks along the diagonal:
\begin{center}
$L_1(t) = \text{blockdiag}( B_{m_1, d_1}(t), B_{m_2, d_2}(t), \dots, B_{m_S, d_S}(t) )$. 
\end{center}
Because the operations of differentiation and generic rank computation are strictly additive over disjoint block-diagonal matrices, the generic rank of the combined matrix is exactly the sum of the generic ranks of its individual blocks.  
Therefore, for all $i \in [k]$, the generic rank of the $(i-1)$-th derivative is:
\[ \rank(L_i(t)) = \sum_{s=1}^S \rank\left( \frac{\d^{i-1}}{\d t^{i-1}} B_{m_s, d_s}(t) \right) = q(i) \]

Provided the dimension $n$ is sufficiently large to accommodate the sum of the block dimensions, this construction perfectly achieves the target rank sequence. This completes the proof.
\end{proof}

\subsection{Possibilities of rank ordering}

\begin{lemma}[positive result, rank ordering]
Let $\pi : [k] \rightarrow [k]$ denote a permutation. Let $\mathcal{R} = (R_1, R_2, \dots, R_{k-1})$ be a sequence of relations where each $R_m \in \{>, =\}$. Let $L_{i}(t) = \frac{\d L_{i-1}(t)}{\d t}$ for all $i=2, \cdots, k$. For any $\pi$ and any sequence of relations $\mathcal{R}$, it is possible to design an $n \times n$ matrix function $L_1(t)$ such that:
\[ \rank(L_{\pi(1)}) \ R_1 \ \rank(L_{\pi(2)}) \ R_2 \ \cdots \ R_{k-1} \ \rank(L_{\pi(k)}) \]
\end{lemma}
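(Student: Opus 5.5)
The plan is to reduce the ordering statement to a rank-matching problem: first pick explicit integer targets $q(1),\dots,q(k)$ that realize the prescribed chain of relations, then build $L_1(t)$ so that the ranks of its derivatives hit those targets exactly. The only real freedom is how large the targets are, so I would choose them large enough that the Leibniz-type obstructions (the Fact on high-order growth of rank and the negative result) never bind.

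\emph{Step 1: choosing the targets.} Read the chain from right to left and assign integer levels. Set $h(\pi(k)) = 0$. For $m = k-1,\dots,1$, put $h(\pi(m)) = h(\pi(m+1))+1$ if $R_m$ is $>$, and $h(\pi(m)) = h(\pi(m+1))$ if $R_m$ is $=$. Since $\pi$ is a bijection, this defines $h(i)$ for every $i \in [k]$. Then set $q(i) = M + h(i)$ with a base level $M \ge k$. By construction the chain $q(\pi(1))\,R_1\,q(\pi(2))\,R_2\cdots q(\pi(k))$ holds with exactly the required strict and equal relations. Each $q(i)$ also lies in $[M, M+k-1]$, so the ratio between any two targets is below $2$.

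\emph{Step 2: realizing the targets.} The cleanest route avoids Lemma~\ref{lem:rank_matching}, whose decomposition step is only sketched, and builds a direct construction like the one in the rank-matching-at-$t=0$ lemma. Consider the diagonal $n \times n$ matrix (with $n \ge M+k$)
\[ L_1(t) = \diag\bigl(p_1(t), \dots, p_n(t)\bigr), \]
where each $p_s$ is a polynomial. The derivative $L_i(t)$ is then $\diag(p_1^{(i-1)},\dots,p_n^{(i-1)})$, and its generic rank equals the number of $s$ with $p_s^{(i-1)} \not\equiv 0$. For a polynomial of degree $d_s$, this derivative is nonzero exactly when $i-1 \le d_s$. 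Hence $\rank(L_i)$ equals $\#\{s : d_s \ge i-1\}$, which is non-increasing in $i$. So diagonal polynomials alone only give monotone profiles, and a permutation with increases cannot be reached this way.

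\emph{Step 3: handling increases.} This is the main obstacle, and here I would fall back on Lemma~\ref{lem:rank_matching}. Its hypothesis requires $q(j) \le (j-i+1)q(i)$ for all $i<j$. Because $j-i+1 \ge 2$ and every ratio $q(j)/q(i)$ is below $2$ by the choice of $M$, this hypothesis holds. The lemma then supplies $L_1(t)$ with $\rank(L_i(t)) = q(i)$ for all $i$, and these ranks satisfy the prescribed relations by Step 1.

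To make the argument less dependent on the sketched decomposition, I would also try to realize the profiles concretely. The building blocks are shifted integrals $B_{m,d}$ of $u_m u_m^\top$ (whose derivative ranks rise as $1,2,\dots,m$ and then fall), combined block-diagonally with diagonal polynomial blocks. Using blocks of both kinds should let one tune each $q(i)$ individually around the baseline $M$. The delicate point is verifying the exact generic rank of the integrated blocks, including that integration does not collapse rank; I would check this by evaluating at a convenient point, as in the earlier lemma's $t=0$ argument.
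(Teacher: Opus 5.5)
Your argument follows the paper's proof almost exactly. The paper also confines the targets to a window, namely $[k,2k-1]$ obtained by descending from $w_1=2k-1$; yours is the mirror image $[M,M+k-1]$ with $M\ge k$. In both cases the window serves only to verify $q(j)\le (j-i+1)q(i)$ before invoking Lemma~\ref{lem:rank_matching}. That invocation is a genuine gap: Lemma~\ref{lem:rank_matching} is false as stated, so checking its hypothesis does not produce $L_1$. Take $k=3$ and $q=(1,1,2)$, which satisfies all three inequalities $q(j)\le (j-i+1)q(i)$. Over the function field write $L_1=uv^\top$. If both pairs $u,u'$ and $v,v'$ were independent, then $L_2=[u'\ u]\,[v\ v']^\top$ would have rank $2$. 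So, say, $u'=\alpha u$; then $u''=(\alpha'+\alpha^2)u$, and $L_3=u\bigl(v''+2\alpha v'+(\alpha'+\alpha^2)v\bigr)^\top$ has rank at most $1$ (symmetrically if $v'=\beta v$). Hence the step ``$q$ decomposes into a sum of base profiles of $B_{m,d}$'' fails in general, and your doubt about it was justified. Nothing in the paper proves that step even on your restricted window, and your closing paragraph, which only says the blocks ``should'' suffice, does not fill the gap.

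The missing ingredient is small, and your Step~2 already supplies half of it. The non-increasing profiles of diagonal polynomial blocks only need to be tilted upward by a block with a strictly increasing profile, and the paper does prove such a block exists. With $u_k=(1,t,\dots,t^{k-1})^\top$, the lemma on strictly increasing ranks gives $\rank\bigl((u_ku_k^\top)^{(i-1)}\bigr)=i$ for all $i\in[k]$. Let $L_1$ be block diagonal with $a=k-1$ copies of $u_ku_k^\top$ and, for each $j\in[k-1]$, $p_j$ scalar blocks equal to $t^{j-1}$. Additivity over blocks gives $\rank(L_i)=ai+\sum_{j=i}^{k-1}p_j$, whose consecutive differences are $a-p_i$. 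Set $p_i=a-\bigl(h(i+1)-h(i)\bigr)$, which is nonnegative because $h$ takes values in $\{0,\dots,k-1\}$. This yields $\rank(L_i)=h(i)+c$ for a constant $c$, and every relation in $\mathcal R$ is preserved by such a shift. The construction uses $n=O(k^2)$ (pad with zeros if needed), and it makes the base level $M$, the Leibniz bound, and Lemma~\ref{lem:rank_matching} unnecessary.
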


\begin{proof}
We construct a target rank sequence $q : [k] \rightarrow [n]$ that exactly mirrors the requested sequence of relations $\mathcal{R}$ under the permutation $\pi$. Assuming $n \ge 2k$, we define a sequence of integer weights $w_1, w_2, \dots, w_k$ iteratively. 

Let the initial weight be $w_1 = 2k - 1$. For each $m \in \{1, \dots, k-1\}$, we define the subsequent weights based on the relation $R_m$:
\[
w_{m+1} = \begin{cases} 
      w_m - 1 & \text{if } R_m \text{ is } > \\
      w_m & \text{if } R_m \text{ is } = 
   \end{cases}
\]

We then define our target rank sequence by mapping these weights to the permuted indices:
\[ q(\pi(x)) = w_x \quad \text{for each } x \in \{1, 2, \dots, k\} \]

By this construction, the relationships between the sequence elements perfectly match $\mathcal{R}$:
\[ q(\pi(1)) \ R_1 \ q(\pi(2)) \ R_2 \ \cdots \ R_{k-1} \ q(\pi(k)) \]

To apply our main existence lemma, we must prove that $q$ satisfies the Leibniz bound: $q(j) \leq (j-i+1)q(i)$ for all $i < j$. 
Notice that the weight sequence starts at $2k-1$ and decreases by at most $1$ at each of the $k-1$ steps. Therefore, the minimum possible value in the sequence is:
\[ w_k \geq (2k - 1) - (k - 1) = k .\]
Thus, for all $i \in [k]$, the target rank is strictly bounded within the window:
\[ k \leq q(i) \leq 2k - 1 .\]

Because $i < j$, we know $j - i + 1 \geq 2$. Consequently, the right side of the Leibniz bound evaluates to at least $2k$:
\[ (j-i+1)q(i) \geq 2 q(i) \geq 2k . \]
Since the maximum possible value of the sequence is $2k-1$, the left side evaluates to:
\[ q(j) \leq 2k - 1 < 2k \leq (j-i+1)q(i) .\]

Because $q(j) < (j-i+1)q(i)$ holds for all $i < j$, the sequence $q$ perfectly satisfies the Leibniz bound. By our previous lemma, there exists an $n \times n$ matrix function $L_1(t)$ such that $\rank(L_i(t)) = q(i)$. Substituting $q$ back into the relation chain yields:
\[ \rank(L_{\pi(1)}) \ R_1 \ \rank(L_{\pi(2)}) \ R_2 \ \cdots \ R_{k-1} \ \rank(L_{\pi(k)}) \]
This completes the proof.

\end{proof}

{\bf Remark:} The relation $\ge$ can be trivially accommodated by assigning it as either $>$ or $=$ during the construction of the weight sequence $w_m$.

\subsection{Batched data computations}

Let $W \in \mathbb{R}^{n \times n}$ denote a base weight matrix, and let $k$ be a positive integer. For each $i \in [k]$, we define the low-rank adapter matrices $A_i \in \mathbb{R}^{n \times r_i}$ and $B_i \in \mathbb{R}^{r_i \times n}$. For notational convenience, we denote the total cumulative rank as $r = \sum_{i=1}^k r_i$. Suppose we wish to evaluate a target function $g(x)$ alongside its successive time derivatives up to order $k$, denoted as $g_1(x) = \frac{\d g(x)}{\d t}, \dots, g_k(x) = \frac{\d^k g(x)}{\d t^k}$. The naive approach to approximate these $k+1$ functions is to instantiate $k+1$ independent $n \times n$ weight matrices, $W_0, W_1, \dots, W_k$. However, evaluating a single input under this naive paradigm requires $O(k n^2)$ time. To efficiently scale this for batched data, let $X \in \mathbb{R}^{n \times b}$ denote the input matrix with batch size $b$, and let $f$ denote an entry-wise activation function. For any column $x$ in $X$, we approximate the zero-order evaluation as $g_0(x) \approx f(Wx)$. For higher-order derivatives $i \in [k]$, we employ our cascading formulation: $g_i(x) \approx f ( (W + \sum_{j=1}^i A_j B_j ) x )$. The computational time for this batched formulation can be systematically broken down into several sequential operations. First, we compute the base projection $Z_0 = WX$, which takes $O(\Tmat(n,n,b))$ time. Second, we compute the right-side low-rank projections $Y_i = B_i X$ for all $i \in [k]$. This can be efficiently batched by vertically stacking the matrices as $Y = [B_1^\top, B_2^\top, \dots, B_k^\top]^\top X$, requiring $O(\Tmat(r,n,b))$ time. Third, we compute the left-side projections $Z_i = A_i Y_i$ for all $i \in [k]$, which takes $\sum_{i=1}^k O(\Tmat(n,r_i,b))$ time. Finally, we must evaluate the cascading sum $(W + \sum_{j=1}^i A_j B_j)X$ for each $i \in [k]$. A naive summation of these $i+1$ matrices of size $n \times b$ would take $\sum_{i=1}^k O(i n b) = O(k^2 n b)$ time. However, we can eliminate a factor of $k$ by employing a cumulative sum. Let $Z_{\leq 0} = Z_0$. For each $i \in [k]$, we iteratively compute $Z_{\leq i} = Z_{\leq i-1} + Z_i$. It is straightforward to verify that $Z_{\leq i} = WX + \sum_{j=1}^i A_j B_j X$, which exactly produces our target pre-activation matrices. This optimized accumulation step reduces the summation time to $O(k n b)$. Combining these steps, the overall time complexity is bounded by $O(\Tmat(n,n,b) + \Tmat(r,n,b) + \sum_{i=1}^k \Tmat(n,r_i,b) + knb)$, which simplifies asymptotically to $O(\Tmat(n,n,b) + \sum_{i=1}^k \Tmat(n,r_i,b))$.

\subsection{Data structure that supports low-rank range queries}

In this section, we introduce a general data structure designed to support range queries for matrix computations utilizing low-rank components. In the subsequent section, we extend this framework to the tensor setting. We begin by presenting the foundational results for the matrix version.

\begin{theorem}[Low-rank segment tree data structure]
Let $n, k$, and $r_1, \dots, r_k$ be positive integers, and let $r = \sum_{i=1}^k r_i$. There exists a data structure requiring $O(k n^2 + nr)$ space that supports the following two operations. 1) \textsc{Init}$( \{A_i,B_i\}_{i\in [k]})$: The input are a collection of matrices $A_1, \dots, A_k$ and $B_1, \dots, B_k$, where each $A_i$ and $B_i^\top$ has dimensions $n \times r_i$. This initialization takes $O( \sum_{i=1}^k \Tmat(n,r_i,n) )$ time. 2) \textsc{Query}$(i_-, i_+, X)$: Given a contiguous index interval $S = [i_-, i_+]$ and an input data matrix $X \in \mathbb{R}^{n \times b}$, this operation outputs the exact matrix product $\sum_{i \in S} (A_i B_i) X$ in time bounded by $O( \min \{ \Tmat(n, n \log k, b) , \sum_{i\in S} \Tmat(n,r_i,b) \} )$.
\end{theorem}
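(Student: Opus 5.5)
The plan is a segment tree over the index set $[k]$. The tree is a balanced binary tree with $k$ leaves, so its height is $O(\log k)$ and it has $O(k)$ nodes. Leaf $i$ corresponds to the pair $(A_i,B_i)$. Every node $v$ covers a contiguous interval $I_v\subseteq[k]$ and stores two things: the dense $n\times n$ matrix $M_v=\sum_{i\in I_v}A_iB_i$, and pointers to the original factors $\{A_i,B_i\}_{i\in I_v}$. The dense matrices take $O(n^2)$ each over $O(k)$ nodes, so $O(kn^2)$ in total. The factors themselves take $O(\sum_i n r_i)=O(nr)$. This gives the claimed space bound.

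For \textsc{Init}, we first compute each leaf product $A_iB_i$. This costs $\Tmat(n,r_i,n)$, for a total of $O(\sum_{i}\Tmat(n,r_i,n))$. We then fill the internal nodes bottom-up with $M_v=M_{\mathrm{left}(v)}+M_{\mathrm{right}(v)}$. Each such addition costs $O(n^2)$, so all of them together cost $O(kn^2)$. This term is absorbed because $\Tmat(n,r_i,n)\ge n^2$ for every $i$, so $\sum_i\Tmat(n,r_i,n)\ge kn^2$.

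For \textsc{Query}$(i_-,i_+,X)$, we run two procedures and use whichever is cheaper. The choice can be made in advance from the known quantities $n$, $b$, $\log k$ and $r_i$. The first procedure is the standard canonical decomposition. The interval $S=[i_-,i_+]$ is the disjoint union of at most $2\lceil\log k\rceil$ node intervals $I_{v_1},\dots,I_{v_m}$, found by the usual root-to-leaf descent. By disjointness, $\sum_{i\in S}A_iB_iX=\sum_{\ell}M_{v_\ell}X$. We stack the matrices horizontally as $[M_{v_1},\dots,M_{v_m}]\in\R^{n\times mn}$ and the input vertically as $\mathbf 1_m\otimes X\in\R^{mn\times b}$. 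One multiplication then produces the sum directly, at cost $O(\Tmat(n,n\log k,b))$. Summing separate products $M_{v_\ell}X$ would give the same bound up to constants. The second procedure ignores the tree and computes $\sum_{i\in S}A_i(B_iX)$. The products $B_iX$ cost $\Tmat(r_i,n,b)$ each and the products $A_i(B_iX)$ cost $\Tmat(n,r_i,b)$ each. Additions cost $O(|S|nb)$ in total, which is dominated because $\Tmat(n,r_i,b)\ge nb$.

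The main obstacle I expect is accounting for the second procedure against the stated bound. It requires $\Tmat(r_i,n,b)=O(\Tmat(n,r_i,b))$. I would justify this by the standard fact that $\Tmat$ is invariant, up to constants, under permuting its three arguments, via transposition and the bilinear-algorithm symmetry. If one prefers to avoid that fact, the same conclusion follows from the naive bound $O(nr_ib)$, which covers both products. A second check concerns the first procedure: the canonical decomposition must have $O(\log k)$ pieces, which holds for a balanced tree. Correctness of both procedures is immediate from distributivity, so the theorem follows by taking the minimum of the two costs.
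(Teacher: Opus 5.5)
Your proposal is correct and follows essentially the same route as the paper. Both use a segment tree storing $M_v=\sum_{i\in I_v}A_iB_i$ at each node, build it in \textsc{Init} from the leaf products followed by bottom-up additions, and answer a query by taking the cheaper of the $O(\log k)$ canonical-node block multiplication and the on-the-fly evaluation of $\sum_{i\in S}A_i(B_iX)$. Your extra justifications fill in steps the paper only asserts: that $kn^2\le\sum_i\Tmat(n,r_i,n)$, and that $\Tmat(r_i,n,b)=O(\Tmat(n,r_i,b))$ by symmetry of $\Tmat$. Rely on that symmetry fact rather than your naive-bound fallback, which would only give $O(\sum_{i\in S} n r_i b)$ and not the stated $\Tmat$ bound.
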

\begin{proof}
We construct a segment tree to efficiently query interval sums of the matrix products. 

\textbf{Space Complexity:}
The data structure must explicitly store the input matrices $A_i$ and $B_i$ for all $i \in [k]$. Since $A_i$ is $n \times r_i$ and $B_i$ is $r_i \times n$, the total space to store the inputs is $2 \sum_{i=1}^k n r_i = 2nr = O(nr)$. 
Next, we construct a standard segment tree over the interval $[1, k]$. The segment tree contains $2k - 1$ nodes. Each node $v$ represents a canonical interval $S_v \subseteq [1, k]$ and stores an $n \times n$ matrix $M_v = \sum_{j \in S_v} A_j B_j$. Storing $O(k)$ such matrices of size $n \times n$ requires $O(kn^2)$ space. Summing these components yields an overall space complexity of $O(k n^2 + nr)$.

\textbf{\textsc{Init} Operation:}
During initialization, we first populate the $k$ leaves of the segment tree. For the $i$-th leaf, we compute $C_i = A_i B_i$. Multiplying an $n \times r_i$ matrix by an $r_i \times n$ matrix takes $\Tmat(n, r_i, n)$ time. Computing all $k$ leaves takes $\sum_{i=1}^k \Tmat(n, r_i, n)$ time.
After populating the leaves, we build the internal nodes of the segment tree bottom-up. Each of the $k-1$ internal nodes is computed by simply adding the $n \times n$ matrices of its two children. An $n \times n$ matrix addition takes $O(n^2)$ time, so populating all internal nodes takes $O(k n^2)$ time. The total time for initialization is strictly bounded by $O(k n^2 + \sum_{i=1}^k \Tmat(n, r_i, n) ) = O( \sum_{i=1}^k \Tmat(n, r_i, n))$.

\textbf{\textsc{Query} Operation:}
Given a query interval $S = [i_-, i_+]$ and an input matrix $X \in \mathbb{R}^{n \times b}$, the data structure dynamically evaluates the cost of two distinct strategies and routes the execution to the optimal one:

\textit{Strategy 1 (Segment Tree Evaluation):}
Any arbitrary interval $S \subseteq [1, k]$ can be exactly partitioned into a set of at most $m \le 2 \log k$ disjoint canonical intervals from the segment tree. Let $M_{v_1}, M_{v_2}, \cdots, M_{v_m}$ be the precomputed $n \times n$ matrices corresponding to these canonical nodes. We wish to compute $( \sum_{j=1}^m M_{v_j} ) X$. We can achieve this by performing $O(\log k)$ individual matrix-matrix multiplications of size $(n \times n)$ by $(n \times b)$, which is computationally equivalent to a batched block-matrix multiplication taking $O(\Tmat(n, n \log k, b))$ time.

\textit{Strategy 2 (On-the-Fly Evaluation):}
Instead of using the precomputed $n \times n$ matrices, we bypass the segment tree and directly evaluate the cascading matrix product for each index $i \in S$. By utilizing the associative property, we evaluate $A_i (B_i X)$. First, we compute $Y_i = B_i X$, which multiplies an $r_i \times n$ matrix by an $n \times b$ matrix in $\Tmat(r_i, n, b) =O(\Tmat(n, r_i, b))$ time. Next, we compute $Z_i = A_i Y_i$, which multiplies an $n \times r_i$ matrix by an $r_i \times b$ matrix in $\Tmat(n, r_i, b)$ time. The total time to evaluate the entire sequence iteratively is $O(\sum_{i \in S} \Tmat(n, r_i, b))$.

By comparing the predicted theoretical costs of Strategy 1 and Strategy 2 before execution, the algorithm selects the most efficient path. This guarantees a worst-case execution time exactly bounded by $\min \{ \Tmat(n, n \log k, b) , \sum_{i \in S} \Tmat(n, r_i, b) \}$, completing the proof.
\end{proof}

\begin{remark}
Let $b = n^{\beta}$, let $r_i = n^{\gamma_i}$, if $n^{\omega(1,1,\beta)} < \sum_{i\in S} n^{\omega(1,\gamma_i,b)} $, then we will use segmentation tree to output the answer; otherwise, we will compute the answer on the fly via cumulative sum.
\end{remark}

\subsection{Tensor generalization}

Before presenting our tensor generalization, we introduce a textbook trick that has been successfully applied to various tensor problems \cite{s26_tensor,s26_lazy}.
\begin{fact}\label{fac:form_tensor_time}
Given three matrices $A,B, C$ with sizes $n_i \times k$ and $n_1 \geq n_2 \geq n_3$. The time to form $n_1 \times n_2 \times n_3$ size tensor $T= A \otimes B \otimes C$ takes $O( \Tmat( n_2 n_3, k , n_1 ) )$ time.
\end{fact}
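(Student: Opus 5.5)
The plan is to interpret $A\otimes B\otimes C$ as the CP-form tensor $T=\sum_{\ell=1}^k a_\ell\otimes b_\ell\otimes c_\ell$, where $a_\ell\in\R^{n_1}$, $b_\ell\in\R^{n_2}$, $c_\ell\in\R^{n_3}$ are the $\ell$-th columns of $A,B,C$. Entrywise this reads $T_{i_1,i_2,i_3}=\sum_{\ell} A_{i_1\ell}B_{i_2\ell}C_{i_3\ell}$. The approach is to reduce the formation of $T$ to a single rectangular matrix multiplication by flattening two of the modes. We should flatten the two smaller modes so that the remaining dimension is the largest one, $n_1$.

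\textbf{Step 1: Khatri--Rao product.} Form the matrix $K\in\R^{n_2n_3\times k}$ with rows indexed by pairs $(i_2,i_3)$ and entries $K_{(i_2,i_3),\ell}=B_{i_2\ell}C_{i_3\ell}$; this is the column-wise Kronecker product of $B$ and $C$. Each entry costs one multiplication, so this step takes $O(n_2n_3k)$ time. That cost is dominated by $\Tmat(n_2n_3,k,n_1)$, since any matrix multiplication must at least read its $n_2n_3\times k$ input.

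\textbf{Step 2: one matrix product.} Compute $M=KA^\top\in\R^{n_2n_3\times n_1}$. Then
\[
M_{(i_2,i_3),i_1}=\sum_{\ell=1}^k B_{i_2\ell}C_{i_3\ell}A_{i_1\ell}=T_{i_1,i_2,i_3},
\]
so $M$ is exactly the mode-1 unfolding of $T$ (transposed). Multiplying an $n_2n_3\times k$ matrix by a $k\times n_1$ matrix costs $\Tmat(n_2n_3,k,n_1)$ by definition.

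\textbf{Step 3: output.} Reshaping $M$ into the $n_1\times n_2\times n_3$ tensor is only reindexing and costs $O(n_1n_2n_3)$ time. This is again at most the cost of Step 2, because the multiplication must write its output of that size.

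Summing the three steps gives $O(\Tmat(n_2n_3,k,n_1))$. There is no real obstacle here. The only point requiring care is justifying that the lower-order terms $n_2n_3k$ and $n_1n_2n_3$ are absorbed into the matrix-multiplication cost, which follows from the trivial input/output-size lower bound $\Tmat(a,b,c)=\Omega(ab+bc+ac)$. The ordering $n_1\ge n_2\ge n_3$ is not needed for correctness. It only explains why one groups $n_2,n_3$: this choice keeps the Khatri--Rao matrix smallest among the possible flattenings.
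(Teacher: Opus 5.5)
Your argument is correct. You flatten the two smaller modes into the $n_2n_3\times k$ column-wise Kronecker product of $B$ and $C$, multiply once by $A^\top$, and absorb the $O(n_2n_3k)$ and $O(n_1n_2n_3)$ overheads via the input/output lower bound on $\Tmat(n_2n_3,k,n_1)$; the paper gives no proof of its own, only calling this a textbook trick from prior work, and your write-up is exactly that standard argument.
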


\begin{theorem}[Low-rank tensor segment tree data structure]
Let $n, k$, and $r_1, \dots, r_k$ be positive integers, and let $r = \sum_{i=1}^k r_i$. There exists a data structure requiring $O(k n^3 + nr)$ space that supports the following two operations. 1) \textsc{Init}$(\{A_i,B_i,C_i\}_{i\in [k]})$: Takes a collection of matrices $A_1, \dots, A_k$, $B_1, \dots, B_k$, and $C_1, \dots, C_k$, where each $A_i, B_i$, and $C_i$ has dimensions $n \times r_i$. This initialization takes $O( \sum_{i=1}^k  \Tmat(n^2,r_i,n) )$ time. 2) \textsc{Query}$(i_-, i_+, X)$: Given a contiguous index interval $S = [i_-, i_+]$ and input $n \times b$ size data matrix $X$, this operation outputs the exact tensor contraction evaluated as $\sum_{i \in S} A_i \otimes B_i \otimes (X^\top C_i)  \in \R^{n \times n \times b}$ (note that $(A_i \otimes B_i \otimes C_i)[I,I,X] = A_i \otimes B_i \otimes (X^\top C_i)$), This step takes  $\min \{ \Tmat(n^2, n \log k, b) , \sum_{i\in S} \Tmat(n \min\{ n,b\} ,r_i, \max\{n,b\} ) \}$ time.
\end{theorem}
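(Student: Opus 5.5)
The plan is to mirror the proof of the matrix segment tree theorem, replacing $n\times n$ node matrices by $n\times n\times n$ node tensors. I would build a standard segment tree over $[1,k]$ with $2k-1$ nodes. Each node $v$ with canonical interval $S_v$ stores the tensor $T_v=\sum_{j\in S_v} A_j\otimes B_j\otimes C_j\in\R^{n\times n\times n}$. Storing these takes $O(kn^3)$ space, and the raw inputs $A_i,B_i,C_i$ add $3\sum_i n r_i=O(nr)$, giving $O(kn^3+nr)$.

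\textbf{Init.} For \textsc{Init}, I would form each leaf tensor $A_i\otimes B_i\otimes C_i$ using Fact~\ref{fac:form_tensor_time} with $n_1=n_2=n_3=n$, at a cost of $O(\Tmat(n^2,r_i,n))$. Concretely, this flattens to an $(n^2\times r_i)$ times $(r_i\times n)$ product, where the $n^2\times r_i$ matrix of columnwise Kronecker products $A_i\odot B_i$ is formed in $O(n^2r_i)$ time. The internal nodes are then filled bottom-up by tensor addition, costing $O(kn^3)$ in total. That term is absorbed because $\Tmat(n^2,r_i,n)\ge n^3$ for each $i$.

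\textbf{Query, Strategy 1.} Decompose $S$ into $m\le 2\log k$ canonical nodes. Contract each stored tensor in its third mode with $X$, which means reshaping $T_v$ to $n^2\times n$ and right-multiplying by $X\in\R^{n\times b}$. The result is $T_v[I,I,X]$, flattened as $n^2\times b$. Summing these over the nodes is the same as one product of $[T_{v_1},\dots,T_{v_m}]$ (size $n^2\times nm$) with the stacked copies of $X$ (size $nm\times b$). This costs $O(\Tmat(n^2,n\log k,b))$, and the identity $(A\otimes B\otimes C)[I,I,X]=A\otimes B\otimes(X^\top C)$ together with linearity gives correctness.

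\textbf{Query, Strategy 2.} For each $i\in S$, first compute $D_i=X^\top C_i$, a $b\times r_i$ matrix, in $\Tmat(b,n,r_i)$ time. Then form $A_i\otimes B_i\otimes D_i$ via Fact~\ref{fac:form_tensor_time} with dimensions $n,n,b$.
\begin{itemize}
\item If $b\le n$, the ordering is $n_1=n\ge n_2=n\ge n_3=b$, so the cost is $\Tmat(nb,r_i,n)$.
\item If $b>n$, the ordering is $n_1=b$, $n_2=n_3=n$, so the cost is $\Tmat(n^2,r_i,b)$.
\end{itemize}
Both cases match $\Tmat(n\min\{n,b\},r_i,\max\{n,b\})$. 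The cost of computing $D_i$, namely $\Tmat(b,n,r_i)$, is dominated by this term. Accumulating the sum costs $O(|S|n^2b)$, which is also dominated, since each formation already writes $n^2b$ entries.

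\textbf{Main obstacle.} The step needing the most care is this dimension bookkeeping in Strategy 2: checking that the ordering hypothesis $n_1\ge n_2\ge n_3$ of Fact~\ref{fac:form_tensor_time} is met after permuting modes, so that the $\min/\max$ form of the bound is justified. I would handle it by the case split on $b$ versus $n$ above, noting that the permutation of modes is only a reindexing. Finally, as in the matrix theorem, the algorithm compares the two predicted costs before running and picks the cheaper one, which yields the stated minimum.
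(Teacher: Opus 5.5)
Your proposal is correct and follows the paper's proof essentially step for step: the same segment tree of stored $n\times n\times n$ tensors, leaf formation via Fact~\ref{fac:form_tensor_time} with the $O(kn^3)$ internal additions absorbed, Strategy~1 as a single $n^2\times mn$ by $mn\times b$ block product, and Strategy~2 via computing $X^\top C_i$ and then forming the tensor. Your explicit case split on $b$ versus $n$, and your remark that the $O(|S|n^2b)$ accumulation cost is dominated, only fill in bookkeeping that the paper asserts directly.
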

\begin{proof}
We construct a segment tree data structure to store and query the tensor products. 

{Space Complexity:}
The data structure explicitly stores the input matrices $A_i$, $B_i$, and $C_i$ for all $i \in [k]$. Because each of these matrices has dimensions $n \times r_i$, storing them requires $3 \sum_{i=1}^k n r_i = 3nr = O(nr)$ space. We then build a segment tree over the interval $[1, k]$. The tree contains $2k - 1$ nodes. Each node $v$ represents a canonical interval $S_v$ and stores a precomputed 3-tensor $T_v = \sum_{j \in S_v} A_j \otimes B_j \otimes C_j \in \R^{n \times n \times n}$. Storing $O(k)$ such tensors of size $n^3$ requires $O(k n^3)$ space. The total space complexity is therefore bounded by $O(k n^3 + nr)$.

{\textsc{Init} Operation:}
During initialization, we first compute the tensors for the $k$ leaves of the segment tree. For the $i$-th leaf, we must form $T_i = A_i \otimes B_i \otimes C_i$, where all three matrices have dimensions $n \times r_i$. Applying the Fact~\ref{fac:form_tensor_time} with $n_1 = n_2 = n_3 = n$ and $k = r_i$, the time to form this tensor is bounded by $O( \Tmat(n^2, r_i, n) )$. Computing this for all $k$ leaves takes $O( \sum_{i=1}^k \Tmat(n^2, r_i, n) )$ time. We then build the internal nodes of the segment tree bottom-up by adding the $n \times n \times n$ tensors of their children. Since addition takes $O(n^3)$ per node, populating all internal nodes takes $O(k n^3)$ time. Because $r_i \geq 1$, the tensor formation at the leaves strictly dominates the addition cost, yielding an overall initialization time of $O( \sum_{i=1}^k \Tmat(n^2, r_i, n) )$.

{\textsc{Query} Operation:}
Given a query interval $S = [i_-, i_+]$ and an input matrix $X \in \R^{n \times b}$, the data structure dynamically routes the execution to the optimal of two evaluation strategies: 
\textit{Strategy 1 (Segment Tree Evaluation):}
The interval $S$ can be partitioned into $m \le 2 \log k$ disjoint canonical intervals from the segment tree. Let $T_{v_1}, \dots, T_{v_m}$ be the precomputed $n \times n \times n$ tensors corresponding to these nodes. We must evaluate $\sum_{j=1}^m T_{v_j}[I, I, X]$, which is equivalent to performing a tensor contraction with $X^\top$ along the third mode. We can reshape each tensor $T_{v_j}$ into an $n^2 \times n$ matrix and horizontally concatenate them into a large block matrix $M \in \R^{n^2 \times mn}$. We then vertically concatenate $m$ identical copies of $X \in \R^{n \times b}$ into a matrix $X' \in \R^{mn \times b}$. The matrix product $M X'$ simultaneously contracts and sums the $m$ components to produce the final $n^2 \times b$ matrix (which reshapes to the target $n \times n \times b$ tensor). Because $m = O(\log k)$, this block-matrix multiplication takes $O( \Tmat(n^2, n \log k, b) )$ time. 
\textit{Strategy 2 (On-the-Fly Evaluation):}
Alternatively, we bypass the segment tree and directly evaluate $\sum_{i \in S} A_i \otimes B_i \otimes (X^\top C_i)$ iteratively. For a specific index $i$, we first compute the matrix product $Y_i = X^\top C_i$. Because $X^\top$ is $b \times n$ and $C_i$ is $n \times r_i$, this takes $\Tmat(b, n, r_i)$ time, producing $Y_i \in \R^{b \times r_i}$. Next, we must form the tensor $A_i \otimes B_i \otimes Y_i$. The three matrices have row dimensions $n, n,$ and $b$, and a shared inner dimension $r_i$.  By the Fact~\ref{fac:form_tensor_time}, the time to form this specific tensor is $O( \Tmat(n \min\{n,b\}, r_i, \max\{n,b\}) )$. Because $\Tmat(b, n, r_i)$ is asymptotically absorbed by the tensor formation time, evaluating a single index $i$ takes $O( \Tmat(n \min\{n,b\}, r_i, \max\{n,b\}) )$ time. Evaluating the entire interval sequence takes $O( \sum_{i \in S} \Tmat(n \min\{n,b\}, r_i, \max\{n,b\}) )$ time. By selecting the path with the minimum theoretical cost, the algorithm guarantees a worst-case query execution time bounded exactly by $\min \{ \Tmat(n^2, n \log k, b) , \sum_{i\in S} \Tmat(n \min\{n,b\} ,r_i, \max\{n,b\} ) \}$, which completes the proof.
\end{proof}

%% file: main.bbl
\newcommand{\etalchar}[1]{$^{#1}$}
\begin{thebibliography}{LCBH{\etalchar{+}}23}

\bibitem[AA22]{aa22}
Amol Aggarwal and Josh Alman.
\newblock Optimal-degree polynomial approximations for exponentials and
  gaussian kernel density estimation.
\newblock In {\em CCC}, 2022.

\bibitem[ACSS20]{acss20}
Josh Alman, Timothy Chu, Aaron Schild, and Zhao Song.
\newblock Algorithms and hardness for linear algebra on geometric graphs.
\newblock In {\em FOCS}, 2020.

\bibitem[AS23]{as23}
Josh Alman and Zhao Song.
\newblock Fast attention requires bounded entries.
\newblock In {\em NeurIPS}, 2023.

\bibitem[AS24a]{as24_neurips}
Josh Alman and Zhao Song.
\newblock The fine-grained complexity of gradient computation for training
  large language models.
\newblock In {\em NeurIPS}, 2024.

\bibitem[AS24b]{as24_iclr}
Josh Alman and Zhao Song.
\newblock How to capture higher-order correlations? generalizing matrix softmax
  attention to kronecker computation.
\newblock In {\em ICLR}, 2024.

\bibitem[AS25]{as25_rope}
Josh Alman and Zhao Song.
\newblock Fast rope attention: Combining the polynomial method and fast fourier
  transform.
\newblock {\em arXiv preprint arXiv:2505.11892}, 2025.

\bibitem[AVE23]{av22}
Michael~S Albergo and Eric Vanden-Eijnden.
\newblock Building normalizing flows with stochastic interpolants.
\newblock In {\em ICLR}, 2023.

\bibitem[BDK{\etalchar{+}}23]{stable_video-diffusion}
Andreas Blattmann, Tim Dockhorn, Sumith Kulal, Daniel Mendelevitch, Maciej
  Kilian, Dominik Lorenz, Yam Levi, Zion English, Vikram Voleti, Adam Letts,
  et~al.
\newblock Stable video diffusion: Scaling latent video diffusion models to
  large datasets.
\newblock {\em arXiv preprint arXiv:2311.15127}, 2023.

\bibitem[BPH{\etalchar{+}}24]{sora}
Tim Brooks, Bill Peebles, Connor Holmes, Will DePue, Yufei Guo, Li~Jing, David
  Schnurr, Joe Taylor, Troy Luhman, Eric Luhman, Clarence Ng, Ricky Wang, and
  Aditya Ramesh.
\newblock Video generation models as world simulators.
\newblock {\em Technical Report}, 2024.

\bibitem[BRL{\etalchar{+}}23]{brl+23}
Andreas Blattmann, Robin Rombach, Huan Ling, Tim Dockhorn, Seung~Wook Kim,
  Sanja Fidler, and Karsten Kreis.
\newblock Align your latents: High-resolution video synthesis with latent
  diffusion models.
\newblock In {\em CVPR}, 2023.

\bibitem[CPA26]{cpa26}
Sayak Chakrabarti, Toniann Pitassi, and Josh Alman.
\newblock Poly-attention: a general scheme for higher-order self-attention.
\newblock In {\em ICLR}, 2026.

\bibitem[GGH{\etalchar{+}}25]{seedance_10}
Yu~Gao, Haoyuan Guo, Tuyen Hoang, Weilin Huang, Lu~Jiang, Fangyuan Kong, Huixia
  Li, Jiashi Li, Liang Li, Xiaojie Li, et~al.
\newblock Seedance 1.0: Exploring the boundaries of video generation models.
\newblock {\em arXiv preprint arXiv:2506.09113}, 2025.

\bibitem[HSW{\etalchar{+}}22]{hsw+22_lora}
Edward~J Hu, Yelong Shen, Phillip Wallis, Zeyuan Allen-Zhu, Yuanzhi Li, Shean
  Wang, Lu~Wang, and Weizhu Chen.
\newblock Lo{RA}: Low-rank adaptation of large language models.
\newblock In {\em ICLR}, 2022.

\bibitem[KAAL22]{kaal22}
Tero Karras, Miika Aittala, Timo Aila, and Samuli Laine.
\newblock Elucidating the design space of diffusion-based generative models.
\newblock In {\em NeurIPS}, 2022.

\bibitem[LCBH{\etalchar{+}}23]{flow_matching}
Yaron Lipman, Ricky~TQ Chen, Heli Ben-Hamu, Maximilian Nickel, and Matt Le.
\newblock Flow matching for generative modeling.
\newblock In {\em ICLR}, 2023.

\bibitem[LGL23]{rectified_flow}
Xingchao Liu, Chengyue Gong, and Qiang Liu.
\newblock Flow straight and fast: Learning to generate and transfer data with
  rectified flow.
\newblock In {\em ICLR}, 2023.

\bibitem[RBL{\etalchar{+}}22]{rbl+22}
Robin Rombach, Andreas Blattmann, Dominik Lorenz, Patrick Esser, and Bj{\"o}rn
  Ommer.
\newblock High-resolution image synthesis with latent diffusion models.
\newblock In {\em CVPR}, 2022.

\bibitem[SCC{\etalchar{+}}25]{seedance_15}
Team Seedance, Heyi Chen, Siyan Chen, Xin Chen, Yanfei Chen, Ying Chen, Zhuo
  Chen, Feng Cheng, Tianheng Cheng, Xinqi Cheng, et~al.
\newblock Seedance 1.5 pro: A native audio-visual joint generation foundation
  model.
\newblock {\em arXiv preprint arXiv:2512.13507}, 2025.

\bibitem[SHL{\etalchar{+}}25]{shl+25}
Maojiang Su, Jerry Yao-Chieh Hu, Yi-Chen Lee, Ning Zhu, Jui-Hui Chung, Shang
  Wu, Zhao Song, Minshuo Chen, and Han Liu.
\newblock High-order flow matching: Unified framework and sharp statistical
  rates.
\newblock In {\em NeurIPS}, 2025.

\bibitem[Son26a]{s26_lazy}
Zhao Song.
\newblock Lazy kronecker product.
\newblock {\em arXiv preprint arXiv:2603.19443}, 2026.

\bibitem[Son26b]{s26_tensor}
Zhao Song.
\newblock Tensor hinted mv conjectures.
\newblock {\em arXiv preprint arXiv:2602.07242}, 2026.

\end{thebibliography}
